\theoremstyle{plain}
\newtheorem{theorem}{Theorem}[section]
\newtheorem{proposition}[theorem]{Proposition}
\newtheorem{lemma}[theorem]{Lemma}
\theoremstyle{definition}
\theoremstyle{remark}
\newtheorem{remark}[theorem]{Remark}
\icmltitlerunning{A Coding-Theoretic Analysis of Hyperspherical Prototypical Learning Geometry}
\begin{document}

\twocolumn[
\icmltitle{A Coding-Theoretic Analysis of Hyperspherical Prototypical Learning Geometry}



\icmlsetsymbol{equal}{*}

\begin{icmlauthorlist}
\icmlauthor{Martin Lindstr\"{o}m}{kth}
\icmlauthor{Borja Rodr\'iguez-G\'alvez}{kth}
\icmlauthor{Ragnar Thobaben}{kth}
\icmlauthor{Mikael Skoglund}{kth}
\end{icmlauthorlist}

\icmlaffiliation{kth}{Division of Information Science and Engineering, KTH Royal Institute of Technology, Stockholm, Sweden}

\icmlcorrespondingauthor{Martin Lindström}{martin12@kth.se}
\icmlcorrespondingauthor{Ragnar Thobaben}{ragnart@kth.se}

\icmlkeywords{Representation Learning, Prototypical Learning, Hyperspherical Prototypical Learning, Coding Theory, Geometry}

\vskip 0.3in
\editorsListText
\vskip 0.3in
]



\printAffiliationsAndNotice{}  

\begin{abstract}
Hyperspherical Prototypical Learning (HPL) is a supervised approach to representation learning that designs class prototypes on the unit hypersphere. The prototypes bias the representations to class separation in a scale invariant and known geometry. Previous approaches to HPL have either of the following shortcomings: (i) they follow an unprincipled optimisation procedure; or (ii) they are theoretically sound, but are constrained to only one possible latent dimension. In this paper, we address both shortcomings. To address (i), we present a principled optimisation procedure whose solution we show is optimal. To address (ii), we construct well-separated prototypes in a wide range of dimensions using linear block codes. Additionally, we give a full characterisation of the optimal prototype placement in terms of achievable and converse bounds, showing that our proposed methods are near-optimal.

GitHub: \href{https://github.com/martinlindstrom/coding_theoretic_hpl}{martinlindstrom/coding\_theoretic\_hpl}
\end{abstract}
\section{Introduction}\label{sec:introduction}
\looseness=-1 Representation learning addresses the problem of learning a mapping from a high-dimensional input space to a lower-dimensional representation space subject to suitable inductive biases. These biases are imposed on learning algorithms as additional constraints on, for example, the network architecture or the optimisation algorithm \citep{goyal_inductive_2022}. Geometry-based inductive biases have long been popular in representation learning. For instance, imposing unit norm constraints on the representations has been employed in different unsupervised learning methods, either through explicit normalisation or norm-invariant loss functions in variational autoencoders \citep{davidson_hyperspherical_2018, xu_spherical_2018}, or in self-supervised learning \citep{wang_understanding_2020, wang_normface_2017, wu_unsupervised_2018, chen_simple_2020, bachman_learning_2019, caron_unsupervised_2020}. Imposing representation separation is another common inductive bias used, for example, in contrastive learning \citep{wang_understanding_2020, galvez_role_2023, chen_simple_2020, tian_contrastive_2020} and supervised representation learning \citep{khosla_supervised_2020, guerriero_deepncm_2018, hasnat_von_2017}.

\looseness=-1 In the supervised learning setting, one way to impose representation separation is through \emph{prototypical learning}~\citep{snell_prototypical_2017, jetley_prototypical_2015}. Each class is assigned a prototype, and these are  specified \emph{a priori} to maximise their separation and are held fixed during training, where the algorithm attempts to map input samples to their class prototypes. Therefore, the representations are biased towards being separated based on their class. Recently, Hyperspherical Prototypical Learning (HPL) \citep{mettes_hyperspherical_2019, kasarla_maximum_2022} started imposing unit norm constraints to prototypical learning. This places the representations on the hypersphere and thus enhances representation separation bias in a scale invariant and known geometry.

\begin{figure*}[h]
	\centering
	\includegraphics[width=\linewidth, clip, trim=0.6cm 1.3cm 0.1cm 2.4cm]{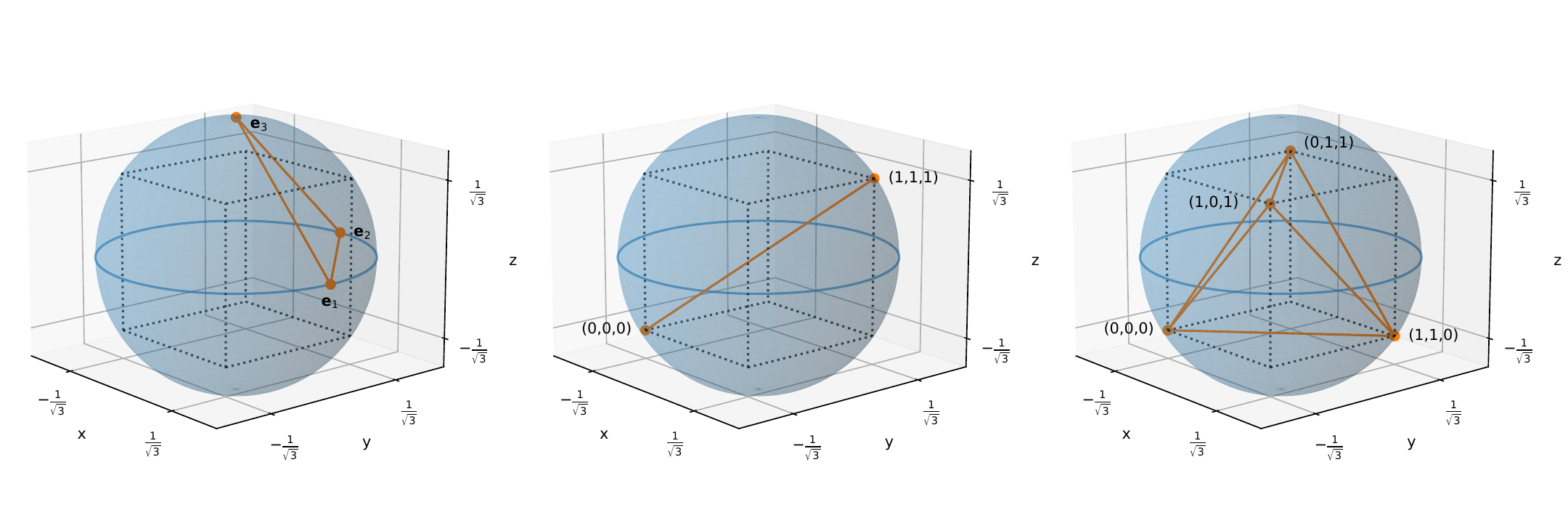}
	\caption{Prototypes on $\mathbb{S}^2$. The \textbf{left} image shows the naïve one-hot encoding approach, which has cosine similarity $0$. One can do much better with binary code-based prototypes. In the \textbf{centre} image, by reducing to two prototypes, the optimal distance with cosine similarity $-1$ is achieved. The \textbf{right} image shows how one can fit four prototypes with a better separation, arriving at a cosine similarity of $\nicefrac{-1}{3}$.}
	\label{fig:cube_sphere}
\end{figure*}

\looseness=-1 To illustrate the idea behind HPL, consider the naïve approach of selecting prototypes as the familiar one-hot encoding that, for $K$ classes, picks the canonical basis vectors $\{ \bm{e}_1, \ldots, \bm{e}_K \}$ in dimension $K$ as prototypes. As illustrated in \Cref{fig:cube_sphere} (left), this results in a suboptimal class separation on the hypersphere. Instead, HPL attempts at placing $K$ maximally separated prototypes on the $n$-dimensional hypersphere $\mathbb{S}^{n-1}$, hopefully with $n < K$. This combinatorial and non-convex problem is well-studied \citep{conway_sphere_1999}, but even on $\mathbb{S}^{2}$ the problem is unsolved for general $K$, and optimal solutions are only known for $K=1, \dots, 14$, and $24$ \citep{musin_tammes_2015}. Despite this, approximate solutions have been proposed. \citet{mettes_hyperspherical_2019} propose a relaxation of the problem that however only achieves sub-optimal separation. \citet{kasarla_maximum_2022}, on the other hand, propose a closed-form solution that we show to be optimal; however, it is only applicable in dimension $n=K-1$.

\looseness=-1 In this paper, we propose two new methods for designing hypershperical prototypes and present sharp bounds on the optimal separation that can be achieved by placing an arbitrary number of prototypes $K\le2^n$ on a hypershpere of dimension $n$. Our approach rests on theory and concepts from error correcting codes, and our contributions are threefold:
\begin{enumerate}[(i)]
    \item We provide a new design approach for  hyperspherical prototypes that maps binary linear codes defined over the $n$-dimensional Hamming space onto the  $n$-dimensional hypersphere $\mathbb{S}^{n-1}$. Our approach provides guarantees on the  class separation by design, at the same time that it enables a more flexible trade-off between separation and the dimension $n$ for a given number of classes $K$.

    \item We derive a \emph{converse} bound on the guaranteed minimum prototype separation as well as an \emph{achievable} bound that certifies that well-separated code-based prototypes exist. These bounds imply that for a large number of classes $K$ and in high dimensions $n$, the worst-case cosine similarity converges to zero. The bounds also show that our code-based prototypes closely approach optimal separation for $n \approx \nicefrac{K}{2}$.

    \item Finally, we provide alternative optimization-based hyperspherical prototypes which achieve the converse bound through a convex relaxation. These improve on the prototypes obtained by~\citet{mettes_hyperspherical_2019}, which do not achieve the converse bound.
\end{enumerate}

The paper is organised as follows. In \Cref{sec:primer}, we motivate the connection between binary codes and hyperspherical prototypes and give a primer on the theory of error correcting codes. In \Cref{sec:prototypes}, we apply this theory in order to give both coding-theoretic prototype constructions and bounds thereon. Also in \Cref{sec:prototypes}, we provide a novel optimisation-based prototype scheme. The performance of the proposed schemes is evaluated and compared to the state of the art in HPL in Section~\ref{sec:experiments}, and Section~\ref{sec:conclusion} concludes the paper.   
\section{Background}
\label{sec:primer}
We start with a formal problem formulation for designing a codebook of hyperspherical prototypes in the HPL setting. Then, we connect binary error correcting codes defined in the Hamming space to hyperspherical prototypes. After that, we provide a brief overview of fundamental concepts in coding theory that are used in this paper to derive code-based hyperspherical prototypes with good separation properties. The interested reader is encouraged to consult \citet{macwilliams_theory_1977} for a more detailed treatment.
\subsection{Problem Formulation}\label{sec:Problem}
We consider  the HPL setting with $K$ classes and $n$ dimensions; that is, we  are interested in placing $K$ prototypes $\bm{c}_1, \dots, \bm{c}_K$  on the $n$-dimensional unit hypersphere $\mathbb{S}^{n-1}$, where the dimension $n$ is a hyperparameter. Our objective is maxmising the Euclidean distance between every pair of prototypes $\bm{c}_i, \bm{c}_j\in\mathbb{S}^{n-1}$ ($i\ne j$). Clearly, the Euclidean distance $d_\mathrm{E}(\bm{c}_i,\bm{c}_j)$ is bounded in the range $[0,2]$ and satisfies that  $\|\bm{c}_i-\bm{c}_j\|^2 = 2 - 2\langle \bm{c}_i, \bm{c}_j \rangle$ for all $\bm{c}_i, \bm{c}_j\in\mathbb{S}^{n-1}$. Hence, maximising the Euclidean distance is equivalent to minimising the cosine similarity $\langle \bm{c}_i, \bm{c}_j \rangle$. In turn, this is equivalent to maximising the angle $\alpha$ between $\bm{c}_i$ and $\bm{c}_j$ since $\alpha = \arccos{\langle \bm{c}_i, \bm{c}_j \rangle}$. Therefore, we will use the cosine similarity as a notion of separation throughout this paper. The objective in HPL is then designing a codebook $\mathcal{C} := \{\bm{c}_i \in \mathbb{S}^{n-1} : i = 1, \dots, K\}$ of $K$ well-separated hyperspherical  prototypes, which can be summarized in the following optimisation problem: 
\begin{equation}\label{eq:optimal_hypershperical_prototypes}
\begin{aligned}
    \min_{\mathcal{C}} \max_{i \ne j} \ \langle \bm{c}_i, \bm{c}_j \rangle. 
\end{aligned}
\tag{$\mathrm{P}$}
\end{equation}
Unfortunately, this problem is both non-convex due to the unit norm constraint, and combinatorial due to the search of the worst pair of prototypes $\bm{c}_i, \bm{c}_j$ ($i\ne j$), requiring tractable relaxations that yield approximate solutions.
\subsection{Connecting Codes to Protypes}\label{sec:intuition}
The approach in this paper leverages coding theory to design $n$-dimensional binary vectors (that is, members of the $n$-dimensional Hamming space), which are mapped onto the $n$-dimensional hypersphere $\mathbb{S}^{n-1}$, thereby creating prototypes with good separation. To provide some intuition as to how error correcting codes relate to placing prototypes that are maximally spaced apart, consider the mapping $\pi : \{ 0, 1 \}^n \to \mathbb{S}^{n-1}$ that maps $n$-dimensional binary vectors $\bm{b}$ from the $n$-dimensional Hamming space to points $\bm{c}$ on the hypersphere. More precisely, the mapping is defined as 
\begin{equation}
\bm{c} = \pi(\bm{b}) \coloneqq \frac{2 (\bm{b} - \nicefrac{1}{2})}{\sqrt{n}}
\label{eq:mapping}
\end{equation}
and enforces that $\bm{c}(\ell) \in \{ - \nicefrac{1}{\sqrt{n}}, +\nicefrac{1}{\sqrt{n}} \}$ and $\|\bm{c}\|^2=1$. This approach allows us to place $2^n$ points on the unit hypersphere $\mathbb{S}^{n-1}$ with a cosine similarity of at most  $\langle \bm{c}, \bm{c}' \rangle \le 1 - \nicefrac{2}{n}$ 
for every pair $\bm{c} = \pi(\bm{b})$ and $\bm{c}' = \pi(\bm{b}')$ with $\bm{b} \neq \bm{b}'$. For $K<2^n$, we can improve the separation guarantees by carefully selecting the $K$ binary vectors placed on the hypersphere via the mapping $\pi$. Error correcting codes provide a systematic way to achieve this, and the concept is illustrated for $n=3$ in  \Cref{fig:cube_sphere}.   

As a baseline, consider one-hot encoding (\Cref{fig:cube_sphere}, left), which provides $K=3$ orthogonal prototypes. Using carefully selected corners of the unit cube and the mapping $\pi$, we can substantially improve on one-hot encoding. Reducing to $K=2$ prototypes (\Cref{fig:cube_sphere}, centre), the unit cube corners corresponding to $\bm{b}_1=(0,0,0)$ and $\bm{b}_2=(1,1,1)$ are diametrically opposed with a cosine similarity of $-1$, which is optimal. From a coding theory perspective, this corresponds to a repetition code with \emph{Hamming distance} $d_\mathrm{H}(\bm{b}_1,\bm{b}_2)=3$ between codewords; that is, the codewords differ in $3$ bits. Increasing to $K=4$ prototypes (\Cref{fig:cube_sphere}, right), the unit cube corners corresponding to $\bm{b}_1=(0,0,0)$, $\bm{b}_2=(0,1,1)$, $\bm{b}_3=(1,0,1)$, and $\bm{b}_4=(1,1,0)$  have a mutual cosine similarity of $\nicefrac{-1}{3}$, which is an improvement over one-hot encoding (\Cref{fig:cube_sphere}, left) while  increasing the number of classes at the same time. Again, the set $\mathcal{B}=\{\bm{b}_1, \bm{b}_2, \bm{b}_3, \bm{b}_4 \}$ constitutes a binary linear code with Hamming distance $d_\mathrm{H}(\bm{b}_i,\bm{b}_j)=2$ between its codewords.

This example demonstrates that due to the definition of the mapping function $\pi$, there exists a relation between the separation of vectors $ \bm{c}=\pi(\bm{b})$ and $ \bm{c}'=\pi(\bm{b}')$ on the unit hypersphere and the Hamming distance $d_\mathrm{H}(\bm{b}, \bm{b}')$ of the binary vectors $\bm{b}$ and $\bm{b}'$ in the sense that a large Hamming distance implies a large separation. We will make this result explicit in \Cref{sec:binary_to_euclidian}. Error correcting codes, designed to have a large Hamming distance $d_\mathrm{H}(\bm{b}, \bm{b}')$ between every pair of codewords $\bm{b}$ and $\bm{b}'$, are hence a well suited tool for designing hyperspherical prototypes, and coding theory provides us with useful bounds on the achievable separation. 
\subsection{A Primer on Coding Theory}
The systematic study of error correcting codes dates back to the seminal work of \citet{hamming_error_1950}. By introducing redundancy in a structured way, error correcting codes allow for error detection  and correction in messages and data, and are essential for guaranteeing the reliability of today's digital communication, computation, and storage systems. Linear codes defined over the Galois field $\mathrm{GF}(q)$, where $q=p^m$ and $p$ is a prime, are of special interest as they offer a structure that can be used for efficient encoding, decoding, and analysis of the distance properties of the code. 

In this paper, we mainly restrict ourselves to binary linear block codes (that is,  $q=p=2$), and only briefly discuss extensions to $q$-ary codes with $q>2$. A binary block code with parameters $[n,k]$ is specified by a codebook $\mathcal{B}$ of $2^k$  binary codewords of length $n$ and a bijective encoder mapping that maps the set of all length-$k$ binary vectors into the code $\mathcal{B}$. This adds $n-k$ bits of redundancy, which we can use to detect and correct errors in the codeword. The \emph{rate} of the code is defined as $R = \nicefrac{k}{n}$, where a low rate corresponds to a high redundancy. The error detection and correction capabilities rely on the \emph{separation} of codeword pairs $\bm{b}_i, \bm{b}_j \in \mathcal{B}$ in Hamming distance, namely
\begin{equation*}
    d_{\mathrm{H}}(\bm{b}_i, \bm{b}_j) := \sum_{\ell = 1}^{n} \mathbb{I}\{ \bm{b}_i(\ell) \ne \bm{b}_j(\ell)\}.
\end{equation*}
A binary code with minimum Hamming distance
\begin{equation*}
    d_\mathrm{min} := \min_{\bm{b}_i, \bm{b}_j \in \mathcal{B}, i \ne j} d_{\mathrm{H}}(\bm{b}_i, \bm{b}_j)
\end{equation*}
can detect $d_\mathrm{min}-1$ errors and correct $\lfloor \nicefrac{(d_\mathrm{min}-1)}{2}\rfloor$ errors. 

In this paper, we fix $k=\lceil \log_2(K) \rceil$ given $K$ classes, suggesting that we are in the low-rate or high-redundancy regime if $n$ is of the order of $K$. Noting that low-rate codes  offer large separation in terms of minimum Hamming distance, we can expect good separation by adopting a code-based approach. To this end, a fundamental result in coding theory is that \emph{good codes exist}. This is formalised by the well-known Gilbert-Varshamov bound \citep[Chapter 1, Theorem 12]{macwilliams_theory_1977}.
\begin{lemma}[Gilbert-Varshamov Bound]\label{lemma:gilbert-varshamov}
There exists an $[n,k]$ code with minimum distance at least $d_\mathrm{min}$, provided that
\begin{equation}\label{eq:gilbert-varshamov}
	2^{n-k} > \sum_{i=0}^{d_\mathrm{min}-2} \binom{n-1}{i}  .
\end{equation}
\end{lemma}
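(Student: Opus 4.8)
The plan is to prove this via the parity-check (Varshamov) construction, building the code as the null space of a suitable parity-check matrix $H \in \mathrm{GF}(2)^{(n-k)\times n}$. The key structural fact I would invoke first is the standard correspondence between minimum distance and linear independence of columns: a binary linear code with parity-check matrix $H$ has minimum distance at least $d_\mathrm{min}$ if and only if every collection of $d_\mathrm{min}-1$ columns of $H$ is linearly independent over $\mathrm{GF}(2)$. This reframes the goal as constructing an $(n-k)\times n$ matrix whose columns $\bm{h}_1, \dots, \bm{h}_n$ avoid all small linear dependencies.

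First I would build the columns greedily. Suppose columns $\bm{h}_1, \dots, \bm{h}_{j-1}$ have already been chosen so that no $d_\mathrm{min}-1$ of them are linearly dependent. A new column $\bm{h}_j$ introduces a dependency of size at most $d_\mathrm{min}-1$ precisely when $\bm{h}_j$ coincides with a $\mathrm{GF}(2)$-linear combination of at most $d_\mathrm{min}-2$ of the existing columns. Since over $\mathrm{GF}(2)$ every such combination is the sum over some subset $S \subseteq \{1,\dots,j-1\}$ with $|S| \le d_\mathrm{min}-2$ (including the empty sum, i.e.\ the zero vector), the number of forbidden values for $\bm{h}_j$ is at most $\sum_{i=0}^{d_\mathrm{min}-2}\binom{j-1}{i}$.

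Next I would identify the binding constraint. The forbidden count $\sum_{i=0}^{d_\mathrm{min}-2}\binom{j-1}{i}$ is nondecreasing in $j$, so the tightest requirement occurs at the last column, $j=n$, giving $\sum_{i=0}^{d_\mathrm{min}-2}\binom{n-1}{i}$. Whenever this quantity is strictly smaller than $2^{n-k}$, the total number of vectors in $\mathrm{GF}(2)^{n-k}$, at least one admissible choice of $\bm{h}_j$ remains at every step, so the greedy procedure runs to completion and yields a valid $H$. Its null space is then a linear code of length $n$, dimension at least $k$, and minimum distance at least $d_\mathrm{min}$, from which one extracts an $[n,k]$ code with the desired distance.

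The main obstacle I anticipate is getting the counting exactly right rather than off by a binomial term: one must argue that a size-$(d_\mathrm{min}-1)$ dependency involving the new column corresponds to matching a combination of at most $d_\mathrm{min}-2$ \emph{old} columns (not $d_\mathrm{min}-1$), and that the empty subset correctly accounts for forbidding $\bm{h}_j = \bm{0}$. The replacement of $n$ by $n-1$ in the stated bound is exactly the payoff of tracking the worst-case step $j=n$, where only $n-1$ columns precede the one being placed.
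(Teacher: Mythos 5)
Your proof is correct: the greedy parity-check (Varshamov) construction, the reduction of minimum distance to linear independence of every $d_\mathrm{min}-1$ columns, and the count of $\sum_{i=0}^{d_\mathrm{min}-2}\binom{j-1}{i}$ forbidden subset-sums at step $j$ (with the empty subset excluding $\bm{h}_j=\bm{0}$ and the worst case at $j=n$) are all handled accurately. The paper itself states this lemma without proof, citing \citet[Chapter~1, Theorem~12]{macwilliams_theory_1977}; your argument is precisely the standard proof given in that reference, so there is nothing to reconcile.
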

The Gilbert-Varshamov bound for the largest $d_\mathrm{min}$ gives a lower bound on $d_\mathrm{min}$. However, the bound only guarantees that good codes \emph{exist}, but not how to find them. Luckily, as we will show in \Cref{sec:prototypes}, several linear binary codes with better minimum distance exist. 
\begin{remark}\label{remark:rewrite_gv}
To derive some of our results, the bound in \eqref{eq:gilbert-varshamov} needs to be evaluated carefully in order to avoid overflow problems. Notice that the bound can be rewritten as 

\begin{align*}
    2^{-k} &> 
    \sum_{i=0}^{d_\mathrm{min}-2} \binom{n-1}{i} \left(\frac{1}{2}\right)^{i} \left(\frac{1}{2}\right)^{n-i} \\
    &= F_\mathrm{bin}\left(d_\mathrm{min}-2; n-1, \tfrac{1}{2}\right),
\end{align*}
where $F_\mathrm{bin}(\cdot \ ; n-1, \nicefrac{1}{2})$ denotes the cumulative distribution function of a binomial distribution with $n-1$ trials with success probability $\nicefrac{1}{2}$. This is implemented in a numerically stable manner in many computational libraries.
\end{remark}
\section{Hyperspherical Prototype Design}\label{sec:prototypes}
In this section, we present our main contributions towards solving the optimization problem \eqref{eq:optimal_hypershperical_prototypes}. As mentioned earlier, this problem is both non-convex due to the unit norm constraint, and combinatorial due to the search of the worst pair of prototypes $\bm{c}_i$ and $\bm{c}_j$ with $i\ne j$. Our contributions are focused on relaxations to the problem \eqref{eq:optimal_hypershperical_prototypes} and bounds on the optimal solution. \Cref{sec:binary_to_euclidian} formalises the coding-theoretic approach introduced with the example in \Cref{fig:cube_sphere}; \Cref{sec:bounds} uses coding-theoretic tools to bound the optimal solution to \eqref{eq:optimal_hypershperical_prototypes}; and \Cref{sec:optimisation_based} presents a relaxation to \eqref{eq:optimal_hypershperical_prototypes} which achieves the bound on the optimal solution.
\subsection{Coding-Theoretic Prototypes}\label{sec:binary_to_euclidian}
We begin by formalising the intuition provided in \Cref{sec:intuition}, namely that a binary $[n,k]$ code with the codebook $\mathcal{B}$ and a large minimum distance $d_\mathrm{min}$ gives good prototypes. Firstly, notice that if we want $K$ prototypes, then we need $|\mathcal{B}| = 2^k \ge K$. Additionally, recall that the mapping $\bm{c} = \pi(\bm{b}) $ defined in \eqref{eq:mapping} produces a unit-norm vector for any binary vector $\bm{b}$. Then, we can guarantee that binary code-based constructions guarantee good separation.
\begin{proposition}\label{prop:binary_to_euclidian}
Assume that $\mathcal{B}$ is the codebook of a binary $[n,k]$ code with minimum distance $d_\mathrm{min}$. Then, for every pair $\bm{b}_i,\bm{b}_j \in \mathcal{B}$ with $ i \ne j$, the cosine similarity between $\bm{c}_i=\pi(\bm{b}_i)$ and $\bm{c}_j=\pi(\bm{b}_j)$ is upper bounded by
\begin{equation*}
	\langle \bm{c}_i, \bm{c}_j \rangle = 1-\frac{2 d_{\mathrm{H}}(\bm{b}_i, \bm{b}_j)}{n} \le 1-\frac{2 d_\mathrm{min}}{n}.
\end{equation*}
\end{proposition}
\begin{proof}
To show the bound on the cosine similarity, notice that two binary vectors differing in $d_{\mathrm{H}}(\bm{b}_i, \bm{b}_j)$ positions obey that $\sum_{\ell=1}^n (\bm{b}_i(\ell ) - \bm{b}_j(\ell ))^2 = d_{\mathrm{H}}(\bm{b}_i, \bm{b}_j)$. Expanding $\|\bm{c}_i - \bm{c}_j\|^2$ in two ways gives
\begin{align*}
	\| \bm{c}_i - \bm{c}_j \|^2 &= 2-2\langle \bm{c}_i, \bm{c}_j \rangle \\
	&= \sum_{\ell=1}^n \left(\frac{2 (\bm{b}_i(\ell ) - \nicefrac{1}{2}) - 2(\bm{b}_j(\ell ) - \nicefrac{1}{2})}{\sqrt{n}}\right)^2.
\end{align*}
Rearranging this and recalling that $d_{\mathrm{H}}(\bm{b}_i, \bm{b}_j) \ge d_\mathrm{min}$ yields the desired result.
\end{proof}

\looseness=-1 Since good codes exist (see \Cref{lemma:gilbert-varshamov}), we provide two examples of binary codes whose minimum distance is close to $d_\mathrm{min} = \nicefrac{n}{2}$ in dimensions where $n < K$. That is, there exist no worse than orthogonal prototypes with zero worst-case cosine similarity in dimensions $n\approx \nicefrac{K}{2}$. Additionally, these codes are easy to implement in Python, and hence are easily integrable in modern machine learning software. The codes are the Bose--Chaudhuri--Hocquenghem (BCH) and Reed--Muller (RM) codes. Other code families like low-density parity-check (LDPC) codes and sparse graph codes  are not competitive in terms of minimum distance guarantees since their minimum distance is usually lower than the one predicted by the Gilbert-Varshamov bound, see \textit{e.g.}, \citet[Figure 9]{mitchell_spatially_coupled}. These popular code families are  hence not further considered in this paper. Polar codes, on the other hand, belong to the same code family as RM codes, see \textit{e.g.}, \citet[Section IV-D]{abbe_reedmuller_2021}, and are hence implicitly covered.

\paragraph{Prototypes from BCH Codes} BCH codes are known to have good minimum distance in low dimensions \citep[pp. 258]{macwilliams_theory_1977}, and although the exact minimum distance is not known in general \citep{li_minimum_2017}, we find empirically that it approaches $d_\mathrm{min} = \nicefrac{n}{2}$ in dimensions $n < K$. They are implemented in the \texttt{Galois} Python library \citep[v0.3.8]{hostetter_galois_2020}. 

\paragraph{Prototypes from RM Codes} The distance properties of RM codes are easier to characterise. Their construction is simple and gives straight-forward distance guarantees~\citep{abbe_reedmuller_2021}. In fact, they provide no worse than orthogonal prototypes in dimension $n \approx \nicefrac{K}{2}$.
\begin{lemma}[Separation Guarantees for  RM Codes]\label{lemma:orthogonal_rm_codes}
Let $\widetilde{K}$ be the smallest power of $2$ such that $\widetilde{K} \ge K$. Then, RM codes in dimension $n \ge \nicefrac{\widetilde{K}}{2}$ have minimum distance $d_\mathrm{min}=\nicefrac{n}{2}$ and guarantee that $\langle \bm{c}_i, \bm{c}_j \rangle \le 0$.
\end{lemma}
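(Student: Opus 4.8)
The plan is to instantiate the RM codes as \emph{first-order} Reed--Muller codes $\mathrm{RM}(1,m)$, whose parameters are classical and deliver exactly the claimed minimum distance. Recall that $\mathrm{RM}(1,m)$ has block length $n = 2^m$, dimension $k = m+1$, and minimum distance $d_\mathrm{min} = 2^{m-1} = \nicefrac{n}{2}$. The cleanest way to obtain the minimum distance is to describe the codewords directly as the evaluation vectors of affine Boolean functions $f(\bm{x}) = a_0 + a_1 x_1 + \cdots + a_m x_m$ over all $2^m$ points of $\mathrm{GF}(2)^m$. A non-constant affine function has a nonzero linear part, which is balanced on $\mathrm{GF}(2)^m$, so every nonzero word other than the all-ones word has weight exactly $2^{m-1}$, while the all-ones word has weight $2^m$. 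Since the code is linear, $d_\mathrm{min}$ equals the minimum nonzero weight, giving $d_\mathrm{min} = 2^{m-1} = \nicefrac{n}{2}$.

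Next I would verify the cardinality constraint, namely that $\mathrm{RM}(1,m)$ supplies at least $K$ codewords for every admissible dimension. Writing $\widetilde{K} = 2^s$, the hypothesis $n \ge \nicefrac{\widetilde{K}}{2}$ combined with $n = 2^m$ forces $m \ge s-1$. The number of codewords is then $2^k = 2^{m+1} = 2n \ge \widetilde{K} \ge K$, so one can always select $K$ distinct codewords $\bm{b}_1,\dots,\bm{b}_K \in \mathcal{B}$ and map them through $\pi$. The admissible dimensions are precisely the powers of two $n \in \{\nicefrac{\widetilde{K}}{2}, \widetilde{K}, 2\widetilde{K}, \dots\}$, since RM codes require $n = 2^m$; note that $\nicefrac{\widetilde{K}}{2}$ is itself a power of two, so the smallest admissible $n$ meets the bound with equality.

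The final step is a one-line appeal to \Cref{prop:binary_to_euclidian}. Substituting $d_\mathrm{min} = \nicefrac{n}{2}$ into the cosine-similarity bound yields
\begin{equation*}
    \langle \bm{c}_i, \bm{c}_j \rangle \le 1 - \frac{2 d_\mathrm{min}}{n} = 1 - \frac{2 \cdot \nicefrac{n}{2}}{n} = 0
\end{equation*}
for every pair $i \ne j$, which is exactly the orthogonality guarantee claimed.

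I do not anticipate a genuine obstacle; the argument is essentially bookkeeping once the right subfamily is fixed. The only point requiring care is the interplay between the power-of-two block length of RM codes and the arbitrary class count $K$: one must route through $\widetilde{K}$ rather than $K$ to guarantee simultaneously that $n = \nicefrac{\widetilde{K}}{2}$ is a legal RM length and that $2n = \widetilde{K} \ge K$ codewords are available. Should one wish to prove the minimum-distance fact from scratch rather than cite it, the mild subtlety is establishing that non-constant affine forms are balanced on $\mathrm{GF}(2)^m$, but this follows immediately from the fact that a nonzero linear functional over $\mathrm{GF}(2)^m$ has a kernel of size $2^{m-1}$.
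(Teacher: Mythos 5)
Your proposal is correct and follows essentially the same route as the paper: specialise to first-order RM codes with parameters $n=2^m$, $k=m+1$, $d_\mathrm{min}=2^{m-1}$, check that $2^k = 2n \ge \widetilde{K} \ge K$ under the hypothesis $n \ge \nicefrac{\widetilde{K}}{2}$, and conclude via \Cref{prop:binary_to_euclidian}. The only difference is that you derive the minimum distance from scratch via balanced affine Boolean functions, whereas the paper simply cites the standard RM parameter formulas $k=\sum_{i=0}^r\binom{m}{i}$ and $d_\mathrm{min}=2^{m-r}$ with $r=1$.
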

\begin{proof}
By construction, RM codes are $[n,k]$ codes with $n = 2^m$, $k = \sum_{i=0}^r \binom{m}{i}$, and minimum distance $d_\mathrm{min} = 2^{m-r}$. Hence, we have cosine similarity $\langle \bm{c}_i, \bm{c}_j \rangle \le 0$ if $r=1$, namely if $1+m = 1+ \log_{2} n = k \ge \log_{2} \widetilde{K} \ge \log_{2} K$ or if $n \ge \nicefrac{\widetilde{K}}{2} \ge \nicefrac{K}{2}$. 
\end{proof}
\begin{remark}
    It is important to note that the cosine similarity guarantee $\langle \bm{c}_i, \bm{c}_j \rangle \le 0$ is not equivalent to pairwaise orthogonality. However, every codeword is \emph{locally} orthogonal to all its minimum-distance neighbours and has no worse than orthogonal separation \emph{globally}. We investigate the global cosine similarity distribution for all prototype generation schemes in \Cref{fig:prototype_density}.
\end{remark}

\paragraph{Realisable Dimensions with Codes} As has been shown, binary codes provide a flexible way to derive prototypes. However, there is a restriction on the dimensions which are realisable: RM codes are defined for $n = 2^m$, and BCH codes are defined for $n = 2^m - 1$ for every $m \in \mathbb{N}_+$. Moreover, additional dimensions are realisable: codes can be \emph{punctured} by removing dimensions, thereby creating a lower-dimensional code, and they can be \emph{extended} by adding more dimensions \citep[Chapter 1, \S9]{macwilliams_theory_1977}. In general, puncturing a code by $1$ bit will reduce its minimum distance by $1$. Similarly, extending a code can (but is not guaranteed to) increase the minimum distance. Therefore, RM and BCH codes can have good distance properties around dimensions  $n=2^m$ and $n=2^m-1$, respectively, but not for general $n$. Compared to \citet{kasarla_maximum_2022}, which is only valid in $n=K-1$, coding-based prototypes hence improve flexibility by guaranteeing good separation for a larger set of admissible dimensions. 
\subsection{Coding-Theoretic Bounds}\label{sec:bounds}
In this section, we provide both upper and lower bounds the worst-case cosine similarity of hyperspherical prototypes in \eqref{eq:optimal_hypershperical_prototypes}. Our achievable (upper) bound is based on \Cref{lemma:gilbert-varshamov}, which states that good binary codes exist. Our converse (lower) bound is based on results from spherical coding theory, which shows that \emph{the minimum separation cannot be improved beyond near orthogonality}. We begin by recalling the Rankin bound from spherical coding theory \citep[Theorem 1.4.1]{ericson_codes_2001}.
\begin{lemma}[Rankin Bound]\label{lemma:rankin}
Any set of $K$ hyperspherical prototypes $\mathcal{C}$ satisfies that
\begin{equation*}
    \max_{\mathcal{C}} \min_{i\ne j} \|\bm{c}_i - \bm{c}_j\|^2 \le \frac{2K}{K-1}.
\end{equation*}
\end{lemma}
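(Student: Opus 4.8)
The plan is to exploit the elementary fact that the squared norm of the sum of all prototypes is nonnegative. This forces the pairwise inner products to be, on average, not too negative, and a simple averaging argument then exhibits a pair of prototypes that must be close together, which caps the minimum distance.

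First I would expand
$\bigl\| \sum_{i=1}^K \bm{c}_i \bigr\|^2 = \sum_{i=1}^K \|\bm{c}_i\|^2 + \sum_{i \ne j} \langle \bm{c}_i, \bm{c}_j \rangle = K + \sum_{i\ne j}\langle \bm{c}_i, \bm{c}_j\rangle$,
using that every $\bm{c}_i \in \mathbb{S}^{n-1}$ has unit norm so that the diagonal terms contribute exactly $K$. Since the left-hand side is a squared norm and hence nonnegative, this immediately yields $\sum_{i\ne j}\langle \bm{c}_i, \bm{c}_j\rangle \ge -K$.

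Next I would divide by the number $K(K-1)$ of ordered pairs with $i \ne j$, so that the average pairwise cosine similarity is bounded below by $\nicefrac{-1}{K-1}$. Because the maximum over the pairs is always at least their average, there must exist a pair with $\langle \bm{c}_i, \bm{c}_j\rangle \ge \nicefrac{-1}{K-1}$; equivalently, $\max_{i\ne j}\langle \bm{c}_i, \bm{c}_j\rangle \ge \nicefrac{-1}{K-1}$. Finally I would translate this back into Euclidean distance via the identity $\|\bm{c}_i - \bm{c}_j\|^2 = 2 - 2\langle\bm{c}_i,\bm{c}_j\rangle$ already used in \Cref{sec:Problem}: the minimum squared distance corresponds to the maximum cosine similarity, so $\min_{i\ne j}\|\bm{c}_i-\bm{c}_j\|^2 = 2 - 2\max_{i\ne j}\langle\bm{c}_i,\bm{c}_j\rangle \le 2 + \nicefrac{2}{K-1} = \nicefrac{2K}{K-1}$. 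As this holds for an arbitrary configuration, it holds for the maximiser over $\mathcal{C}$, giving the claim.

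The one place to be careful is the direction of the inequalities: the \emph{minimum} distance is governed by the \emph{maximum} (not minimum) inner product, while the nonnegativity of the squared sum only bounds the \emph{average} inner product from below. The proof works precisely because the maximum dominates the average, so the nonnegativity constraint is exactly the right tool to control the closest pair. No deeper obstacle arises here; the argument is a short application of the Cauchy–Schwarz-type expansion of the total sum.
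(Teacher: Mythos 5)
Your proof is correct. The paper does not prove \Cref{lemma:rankin} at all---it is imported verbatim from \citet[Theorem 1.4.1]{ericson_codes_2001}---so there is no in-paper argument to compare against; what you give is the standard derivation of the Rankin (simplex) bound: nonnegativity of $\bigl\|\sum_i \bm{c}_i\bigr\|^2$ bounds the average pairwise inner product below by $\nicefrac{-1}{(K-1)}$, the maximum dominates the average, and the identity $\|\bm{c}_i-\bm{c}_j\|^2 = 2-2\langle\bm{c}_i,\bm{c}_j\rangle$ converts this into the distance bound. Your care about the direction of the inequalities is exactly right, and the argument is self-contained and valid for any dimension $n$; note also that it proves precisely the inequality the paper actually uses in \Cref{thm:sandwich_thm} (the cited theorem in \citet{ericson_codes_2001} contains additional content about when equality is attained, which neither you nor the paper needs).
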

Now, we may state the achievable and converse bounds.
\begin{theorem}\label{thm:sandwich_thm}
There exists a set of hyperspherical prototypes $\mathcal{C}$ with cosine similarity at most (separation at least)
\begin{equation}
	\max_{i \ne j} \ \langle \bm{c}_i, \bm{c}_j \rangle \le 1 - \frac{2 d_{GV}}{n},
    \label{eq:achievable_bound}
\end{equation}
where $d_{GV}$ denotes the largest solution to the Gilbert-Varshamov bound in \eqref{eq:gilbert-varshamov}. Conversely, no set of prototypes exists with maximum cosine similarity smaller (better separation) than
\begin{equation}
	\max_{i \ne j} \ \langle \bm{c}_i, \bm{c}_j \rangle \ge \frac{-1}{K-1}.
    \label{eq:converse_bound}
\end{equation}
\end{theorem}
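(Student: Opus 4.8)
The plan is to prove the two bounds separately, since they come from entirely different sources: the achievable bound from the existence of good binary codes, and the converse bound from the Rankin bound on spherical codes.

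\emph{Achievable bound.} This is essentially immediate from the tools already assembled. By \Cref{lemma:gilbert-varshamov}, there exists an $[n,k]$ binary code whose minimum distance is at least $d_{GV}$, where $d_{GV}$ is the largest value of $d_\mathrm{min}$ satisfying \eqref{eq:gilbert-varshamov}. Choosing $k = \lceil \log_2 K \rceil$ guarantees that the codebook $\mathcal{B}$ contains $2^k \ge K$ codewords, so we may select any $K$ of them to serve as the binary representatives of our prototypes. Applying the mapping $\pi$ from \eqref{eq:mapping} and invoking \Cref{prop:binary_to_euclidian}, every pair of resulting prototypes satisfies $\langle \bm{c}_i, \bm{c}_j \rangle \le 1 - \nicefrac{2 d_\mathrm{min}}{n} \le 1 - \nicefrac{2 d_{GV}}{n}$, which is exactly \eqref{eq:achievable_bound}. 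The only subtlety is to state clearly that $d_{GV}$ is \emph{attainable} (i.e.\ that the largest solution to the bound yields an existing code), which follows directly from the Gilbert-Varshamov guarantee.

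\emph{Converse bound.} Here I would start from the Rankin bound (\Cref{lemma:rankin}), which controls the best achievable minimum squared Euclidean distance, and translate it into a statement about cosine similarity using the identity $\|\bm{c}_i - \bm{c}_j\|^2 = 2 - 2\langle \bm{c}_i, \bm{c}_j\rangle$ already established in \Cref{sec:Problem}. Concretely, maximising the minimum squared distance is the same as minimising the maximum cosine similarity, so for any codebook $\mathcal{C}$ the Rankin bound reads
\begin{equation*}
    2 - 2\min_{\mathcal{C}}\max_{i\ne j}\langle \bm{c}_i, \bm{c}_j\rangle \le \frac{2K}{K-1}.
\end{equation*}
Rearranging gives $\min_{\mathcal{C}}\max_{i\ne j}\langle \bm{c}_i, \bm{c}_j\rangle \ge 1 - \nicefrac{K}{(K-1)} = \nicefrac{-1}{(K-1)}$, which is precisely \eqref{eq:converse_bound}. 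The care needed is purely in the direction of the inequalities: because the $\max$/$\min$ swap roles when passing from distance to similarity, I would write out the one-line monotonicity argument explicitly to be sure the sign lands correctly.

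I do not expect a genuine obstacle in either direction, since both bounds reduce to applying a previously stated lemma together with the distance--similarity identity. If anything, the main thing to get right is bookkeeping: ensuring $K \le 2^k$ so that enough codewords exist for the achievable part, and tracking the inequality direction through the $\max$--$\min$ interchange for the converse part. The result is therefore best presented as two short, self-contained arguments rather than a single continuous derivation.
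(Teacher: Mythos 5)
Your proposal is correct and follows exactly the paper's route: the achievable bound by combining \Cref{lemma:gilbert-varshamov} with \Cref{prop:binary_to_euclidian} via the mapping $\pi$, and the converse by rewriting the Rankin bound (\Cref{lemma:rankin}) through the identity $\|\bm{c}_i - \bm{c}_j\|^2 = 2 - 2\langle \bm{c}_i, \bm{c}_j\rangle$. The paper's proof is a two-line sketch of the same argument; your version merely spells out the bookkeeping (the choice $k = \lceil \log_2 K \rceil$ and the $\max$--$\min$ sign tracking) explicitly.
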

\begin{proof}
The achievable bound~\eqref{eq:achievable_bound} follows directly from combining \Cref{prop:binary_to_euclidian} and \Cref{lemma:gilbert-varshamov}. For the converse bound, recalling that $\|\bm{c}_i - \bm{c}_j\|^2 = 2 - 2 \langle \bm{c}_i, \bm{c}_j \rangle$, applying \Cref{lemma:rankin}, and simplifying yields~\eqref{eq:converse_bound}.
\end{proof}

The bounds are numerically evaluated in \Cref{sec:experiments}. A number of remarks are in order. In many practical settings, the converse bound $\nicefrac{-1}{(K-1)}$ is close to $0$. It is therefore impossible to achieve a maximum cosine similarity that is notably better than one-hot encoding. However, as \Cref{lemma:orthogonal_rm_codes} shows, it is possible to have no worse than orthogonal prototypes in low dimension $n=\nicefrac{\tilde{K}}{2}$. Moreover, as we will show with numerical examples, it is possible to have approximately orthogonal prototypes in much lower dimension than $n=K$. Finally, we note that the upper bound can be tightened for $n \ge K$ by recalling one-hot encoding. The bounds are therefore sharp, meaning that orthogonal prototypes are achievable and near-optimal for a large number of classes $K$. Finally, it is interesting to note that the mapping proposed by \citet{kasarla_maximum_2022} is \emph{optimal} since it achieves the converse bound. 
\subsection{Beyond Binary Codes}
In this section, we briefly discuss the generalisation of our results to the case of $q$-ary codes and  motivate the choice of restricting our attention to binary codes.

Assume a construction that combines an $[n_q,k_q]$ code $\mathcal{U}$ over the Galois field GF($q$), with minimum Hamming distance $d_{\mathrm{H, min}}^{(\mathcal{U})}$, with a mapping $\pi_{q}^{(l)}$ that maps $q$-ary symbols $u\in\{0,\ldots,q-1\}$ to points $\tilde{\bm{c}}=\pi_{q}^{(l)}(u)$ on the $l$-dimensional unit hypersphere $\mathbb{S}^{l-1}$, and with pairwise {Euclidean distance} of at least $d_{\mathrm{E, min}}^{(\pi_q^{(l)})}$. Then, an $n_q$-dimensional $q$-ary vector $\bm{u}$ can be mapped to the  unit hypersphere $\mathbb{S}^{n-1}$ in $n=n_q\cdot l$ dimensions by realising the mapping 
\begin{equation}
\label{eq:qary-mapping}
\bm{c} = \pi_q(\bm{u}) \coloneqq \frac{1}{\sqrt{n_q}}\left(\pi_{q}^{(l)}(\bm{u}(1)), \ldots,  \pi_{q}^{(l)}(\bm{u}(n_q)) \right).
\end{equation}
Then, similarly to the binary case, by a proper choice of the code parameters, hyperspherical prototypes with good separation guarantees can be obtained.

\begin{proposition}\label{prop:q-ary_to_euclidian}
Assume $\mathcal{U}$ is the codebook of a $q$-ary $[k_q,n_q]$ code with a minimum Hamming distance $d_{\mathrm{H, min}}^{(\mathcal{U})}$ that is mapped into the unit hyperspehere $\mathbb{S}^{n-1}$ in $n=n_q\cdot l$ dimensions with the mapping $\pi_q$ from~\eqref{eq:qary-mapping}. Furthermore, let $d_{\mathrm{E, min}}^{(\pi_q^{(l)})}$ be the minimum Euclidean distance achieved by the component mapping $\pi_q^{(l)}$. Then, for every codeword pair $\bm{u}_i,\bm{u}_j \in \mathcal{U}$ with $ i \ne j$, the cosine similarity between $\bm{c}_i$ and $\bm{c}_j$ is upper bounded by
\begin{equation*}
	\langle \bm{c}_i, \bm{c}_j \rangle \le 1-\frac{ d_{\mathrm{H, min}}^{(\mathcal{U})}}{n_q} \frac{\left[d_{\mathrm{E, min}}^{(\pi_q^{(l)})}\right]^2}{2}.
\end{equation*}
\end{proposition}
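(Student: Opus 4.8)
The plan is to mirror the structure of the binary proof in Proposition~\ref{prop:binary_to_euclidian}, but now exploiting the \emph{block structure} of the $q$-ary mapping $\pi_q$ defined in~\eqref{eq:qary-mapping}. The key observation is that each codeword $\bm{u}$ is mapped blockwise: the $n_q$ symbols $\bm{u}(1), \ldots, \bm{u}(n_q)$ are each sent through the component mapping $\pi_q^{(l)}$ into $\mathbb{S}^{l-1}$, and these $n_q$ blocks are concatenated and scaled by $\nicefrac{1}{\sqrt{n_q}}$. I would first verify that $\bm{c} = \pi_q(\bm{u})$ indeed lands on $\mathbb{S}^{n-1}$: since each $\pi_q^{(l)}(\bm{u}(t))$ has unit norm and there are $n_q$ of them, the scaling by $\nicefrac{1}{\sqrt{n_q}}$ gives $\|\bm{c}\|^2 = \frac{1}{n_q}\sum_{t=1}^{n_q} 1 = 1$, as required.

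The main step is to expand the squared Euclidean distance $\|\bm{c}_i - \bm{c}_j\|^2$ using the block decomposition. First I would write
\begin{equation*}
\|\bm{c}_i - \bm{c}_j\|^2 = \frac{1}{n_q}\sum_{t=1}^{n_q} \left\| \pi_q^{(l)}(\bm{u}_i(t)) - \pi_q^{(l)}(\bm{u}_j(t)) \right\|^2.
\end{equation*}
Each summand is zero precisely when $\bm{u}_i(t) = \bm{u}_j(t)$, and otherwise it is a squared Euclidean distance between two distinct points on $\mathbb{S}^{l-1}$, hence at least $\bigl[d_{\mathrm{E, min}}^{(\pi_q^{(l)})}\bigr]^2$. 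The number of positions $t$ where the symbols differ is exactly the Hamming distance $d_{\mathrm{H}}(\bm{u}_i, \bm{u}_j)$, so only those positions contribute and each contributes at least $\bigl[d_{\mathrm{E, min}}^{(\pi_q^{(l)})}\bigr]^2$. This yields the lower bound
\begin{equation*}
\|\bm{c}_i - \bm{c}_j\|^2 \ge \frac{d_{\mathrm{H}}(\bm{u}_i, \bm{u}_j)}{n_q} \bigl[d_{\mathrm{E, min}}^{(\pi_q^{(l)})}\bigr]^2 \ge \frac{d_{\mathrm{H, min}}^{(\mathcal{U})}}{n_q} \bigl[d_{\mathrm{E, min}}^{(\pi_q^{(l)})}\bigr]^2,
\end{equation*}
where the last inequality uses the definition of the code's minimum Hamming distance.

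To finish, I would convert the distance bound into a cosine-similarity bound via the identity $\|\bm{c}_i - \bm{c}_j\|^2 = 2 - 2\langle \bm{c}_i, \bm{c}_j\rangle$, which holds because both vectors lie on the unit hypersphere. Rearranging gives $\langle \bm{c}_i, \bm{c}_j\rangle = 1 - \frac{1}{2}\|\bm{c}_i - \bm{c}_j\|^2$, and substituting the lower bound on the squared distance produces exactly the claimed upper bound on $\langle \bm{c}_i, \bm{c}_j\rangle$.

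I expect the only genuinely delicate point to be the blockwise lower bound: one must argue carefully that a position contributes either zero (when symbols agree) or at least $\bigl[d_{\mathrm{E, min}}^{(\pi_q^{(l)})}\bigr]^2$ (when they differ), and that the count of differing positions is precisely $d_{\mathrm{H}}(\bm{u}_i, \bm{u}_j)$. This is where the definition of $d_{\mathrm{E, min}}^{(\pi_q^{(l)})}$ as the \emph{minimum} pairwise distance over distinct symbols is essential. The rest is routine algebra, and the binary case is recovered by setting $q=2$, $l=1$, with $\pi_2^{(1)}$ mapping to $\{\pm 1\}$ so that $d_{\mathrm{E, min}}^{(\pi_2^{(1))}} = 2$, reproducing the factor $\nicefrac{2}{n}$ of Proposition~\ref{prop:binary_to_euclidian}.
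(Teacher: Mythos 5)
Your proof is correct and takes exactly the route the paper intends: the paper's own proof is a one-line deferral to the argument of Proposition~\ref{prop:binary_to_euclidian}, and your blockwise expansion of $\|\bm{c}_i - \bm{c}_j\|^2$ combined with the identity $\|\bm{c}_i - \bm{c}_j\|^2 = 2 - 2\langle\bm{c}_i,\bm{c}_j\rangle$ is precisely that argument carried over to the $q$-ary setting. You in fact supply more detail than the paper does (the unit-norm check, the per-block case analysis, and the $q=2$ consistency check), all of which is sound.
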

\begin{proof}
The proof follows along the same lines as the proof of Proposition~\ref{prop:binary_to_euclidian}.    
\end{proof}
Assume we want to minimise the upper bound on the cosine similarity. We then want to find a $q$-ary code with as large minimum distance as possible. The  Singleton bound \citep[Chapter 1, Theorem 11]{macwilliams_theory_1977} states that every $[n_q,k_q]$ linear code has minimum distance $d_\mathrm{min} \le n_q-k_q+1$. Reed-Solomon (RS) codes with parameters $\log_q K \le k_q \le n_q \le q$ achieve the Singleton bound with equality, and  the only binary code achieving the Singleton bound is the repetition code \citep[Chapter 11]{macwilliams_theory_1977} (see~\Cref{fig:cube_sphere}, middle and \Cref{sec:intuition}). 

Consider now combining RS codes with the \citet{kasarla_maximum_2022} mapping, which has $\left[{d_{\mathrm{E, min}}^{(\pi_q^{(l)})}}\right]^2 = \nicefrac{2q}{(q-1)}$. Then, the cosine similarity for this prototype construction is guaranteed to be upper bounded by
\begin{equation*}
    \langle \bm{c}_i, \bm{c}_j \rangle \le 1 - \frac{q}{q-1} \cdot \frac{n_q-k_q+1}{n_q}.
\end{equation*}
From this result, it follows that the cosine similarity of this construction becomes strictly negative if, and only if $k_q<\nicefrac{n_q}{q}+1\le 2$, where the second inequality comes from the requirement $n_q\le q$ on the length of RS codes. That is, RS codes only guarantee a strictly negative cosine similarity for $k_q=1$, given that $q\ge K$. However, in that case,  the mapping by \citet{kasarla_maximum_2022} already guarantees the optimal separation, and there is no benefit by further extending the dimensions beyond $n_q=1$ with an additional code. For $k_q=2$, $n_q=q$, and under the condition $q^2\ge K$,  we can guarantee a cosine similarity $\langle \bm{c}_i, \bm{c}_j \rangle \le 0$ for $n=(q-1)\cdot q$ dimensions. In the favourable case where $q=2^m$ and $K=2^{2m}$, the construction achieves  $\langle \bm{c}_i, \bm{c}_j \rangle \le 0$ for $n=2^{2m} - 2^m= K - \sqrt{K}$, which for $m>1$ is larger than $n\!=\!2^{2m-1} \!=\! \nicefrac{K}{2}$. $n\!=\! \nicefrac{K}{2}$ is however obtained by the RM-code-based construction as demonstrated in Lemma~\ref{lemma:orthogonal_rm_codes}. Hence, there is no benefit employing RS codes in conjunction with the mapping by \citet{kasarla_maximum_2022} compared to the RM-code-based construction. 
\subsection{Optimisation-Based Prototypes}\label{sec:optimisation_based}
In this section, we compare numerical approaches to approximately solve the non-convex and combinatorial problem \eqref{eq:optimal_hypershperical_prototypes}. Throughout, we employ projected gradient descent to deal with the non-convexity introduced by the unit norm constraint, and compare different relaxations to the combinatorial part of the problem.

\paragraph{Minimising the Average Worst-Case Similarity}
\citet{mettes_hyperspherical_2019} note that solving the combinatorial minimisation in \eqref{eq:optimal_hypershperical_prototypes} is numerically inefficient. Instead, they minimise the average maximum cosine similarity per prototype. More specifically, they define the matrix of prototypes $\bm{C} := [\bm{c}_1, \dots, \bm{c}_K] \in \mathbb{R}^{n \times K}$  describing the codebook $\mathcal{C}$ and propose the problem
\begin{equation}\label{eq:mettes_problem}
\tag{$\mathrm{P}_{\mathrm{AVG}}$}
\begin{aligned}
	\min_{\bm{C}} \quad & \frac{1}{K} \sum_{i=1}^{K} \max_{j} \bm{M}_{i,j}, \\
	\mathrm{s.t.} \quad &\bm{M} = \bm{C}^{\sf T} \bm{C} - 2 \bm{I}, \\
	\quad & \|\bm{c}_i\| = 1,
\end{aligned}
\end{equation}
where $\bm{M}_{i,j}$ denotes the $(i,j)$-th element of the matrix $\bm{M}$ and $\bm{I}$ denotes the identity matrix. Since the diagonal elements of $\bm{C}^{\sf T} \bm{C}$ are always $1$, subtracting twice the identity matrix avoids selecting these. The improvement over the original problem~\eqref{eq:optimal_hypershperical_prototypes} is that multiple prototypes are updated at each gradient step, which improves the convergence speed. However, no proof of convergence or optimality is presented. 

\paragraph{Log-Sum-Exp Relaxation}
We propose a convex relaxation to the combinatorial problem, which we show numerically that it closely approximates the converse bound in \Cref{thm:sandwich_thm}. Specifically, we propose to use the log-sum-exp approximation of the maximum. It is folklore knowledge that for $\bm{x}\in \mathbb{R}^n$ we have 
\begin{equation*}
    \max_i {\bm{x}_i} \leq \frac{1}{t} \log \left( \sum_{i=1}^n \exp(t \bm{x}_i) \right) \leq \max_i {\bm{x}_i} + \frac{\log n}{t},
\end{equation*}
for any temperature $t > 0$. Moreover, the function is convex. For a large temperature $t$, this problem approaches the original problem~\eqref{eq:optimal_hypershperical_prototypes}. By carefully choosing a scheduler for the temperature, we are able to balance the need to update multiple prototypes, and to approximate the original problem. Hence, we propose the problem 
\begin{equation}\label{eq:lse_problem}
\tag{$\mathrm{P}_{\mathrm{LSE}}$}
\begin{aligned}
	\min_{\mathcal{C}} \frac{1}{t} \log \sum_{i\ne j} \exp(t \langle \bm{c}_i, \bm{c}_j \rangle), \\
\end{aligned}
\end{equation}
which can be rewritten as a sum over the upper (or lower) triangular part of $\exp(t \bm{C}^{\sf T} \bm{C})$, excluding the diagonal.
\subsection{Computational Complexity}
In this section, we comment on the computational complexity of the prototype generation schemes in order to give a complete characterisation of the methods. We note however that the computational complexity of generating prototypes is negligible in comparison to network training.

\paragraph{Optimisation-Based Prototypes} 
The optimisation-based prototypes from \eqref{eq:lse_problem} and \eqref{eq:mettes_problem} both require $\mathcal{O}(nK^2)$ operations per gradient step, since all the $K^2$ inner products $\langle \bm{c}_i, \bm{c}_j \rangle$ of $n$-dimensional vectors need to be calculated. In practice, the wall-clock time spent on these calculations is small: Even on a laptop, the computations take on the order of seconds even for $K=1000$ and $n\approx K$.

\paragraph{Coding-Theoretic Prototypes} 
Very efficient implementations of error correcting codes exist: They are implemented and run in real time on billions of light-weight wireless devices. Since the codebooks are fixed, they need only be computed once, and can later be re-used across runs. For BCH codes, tables of their so called \textit{generator polynomials} are available, and with those, all the $K$ codewords can be enumerated quickly with a complexity of $\mathcal{O}(n K\log(K))$ operations. As an example, the generator polynomials of BCH codes are tabulated up to $n=2^{10}-1=1023$ in \citet[Appendix C]{lin_error_2004}, and up to $n=2^{16}-1 = 65 \ 535$ in the MATLAB function \texttt{bchgenpoly} \citep{matlab}. For RM codes, due to their simple structure, it is fast to generate the entire codebook, again with a complexity of $\mathcal{O}(nK\log(K))$ operations. This is done in fractions of a second even for $K=1000$ and $n\approx K$ on a laptop.

\paragraph{\citet{kasarla_maximum_2022} Prototypes} 
The prototypes from \citet{kasarla_maximum_2022} also take less than a second to generate on a laptop. However, their implementation is recursive, and therefore requires increasing the recursion limit for large $K$. Similar to the coding-theoretic prototypes, they can be pre-computed and re-used across runs.
\section{Experiments}\label{sec:experiments}
In this section, we evaluate the separation in terms of maximum cosine similarity for the considered prototype generation schemes and present numerical results on CIFAR-$100$ \citep{krizhevsky_learning_2009}. We also present supplementary results on MNIST \citep{lecun_gradient_1998} in \Cref{app:mnist}. We emphasise that our aim with these experiments is to investigate the relation between prototype separation and performance, and not necessarily to find the best performing realisations of the algorithms.
\subsection{Experimental Setup}
For optimisation-based prototypes, we follow \citet{mettes_hyperspherical_2019} and use stochastic gradient descent (SGD) with learning rate $0.1$ and momentum $0.9$ over $1 \ 000$ epochs. For the log-sum-exp prototypes, we scale the temperature linearly with epochs from $1$ to $K$. For CIFAR-$100$, we use a ResNet-$34$ backbone \citep{he_deep_2016} as implemented by \citet{kasarla_maximum_2022}, and we also use the same hyperparameters (SGD with a cosine annealing learning rate scheduler, learning rate $0.1$, momentum $0.9$, weight decay $5\times 10^{-4}$, and batch size $512$ over $200$ epochs), with standard data augmentation schemes (random $32\times 32$ crops with padding $4$, random horizontal flips with probability $\nicefrac{1}{2}$, and random rotations with up to $15^\circ$). We use cross-entropy loss on the cosine similarities $\bm{C}^{\sf T} \bm{z}$ between the prototypes $\bm{C}$ and the output $\bm{z}$ from the ResNet-$34$ backbone. At test time, classification is done via nearest-neighbour decoding, or equivalently, maximum cosine similarity decoding, by choosing the class of the nearest prototype. All our results are averaged over $5$ runs, and we use a randomised validation set (20 \% of the training set) for every run. For MNIST, we use the lightweight network proposed by \citet{simard_best_2003}, with the same hyperparameters and data augmentations as for CIFAR-$100$, except that we rotate by up to $30^\circ$ (instead of $15^\circ$) and do not flip horizontally.

\begin{figure}[t!]
	\centering
	\includegraphics[width=\linewidth]{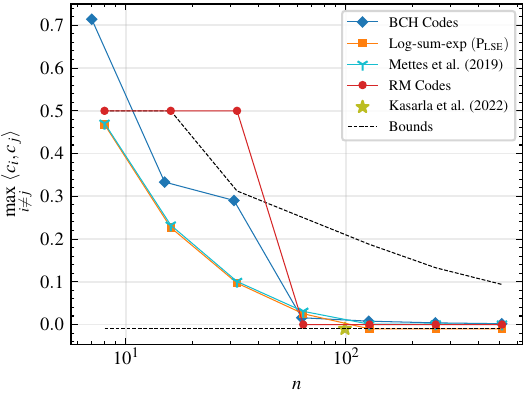}
	\caption{Maximum pairwise cosine similarity of $K=100$ prototypes in various latent space dimensions (logarithmic scale). Coding-theoretic approaches provide additional flexibility over the \citet{kasarla_maximum_2022} mapping. The optimisation-based \eqref{eq:lse_problem} prototypes achieve slightly better separation than the \eqref{eq:mettes_problem} scheme. All schemes (except for RM codes with $n=32$) fall within the achievable and converse bounds from \Cref{thm:sandwich_thm}. For large $n$, the  BCH, RM, and \eqref{eq:mettes_problem} prototypes yield no worse than othogonal prototypes. The \eqref{eq:lse_problem} and \citet{kasarla_maximum_2022} prototypes achieve the converse bound and perform therefore slightly better.}
	\label{fig:k100}
\end{figure}

Some additional remarks on prototype generation are in order. For BCH and RM codes, the mapping between class and prototype is fixed, while the optimisation-based mappings from \eqref{eq:lse_problem} and \eqref{eq:mettes_problem} randomises the assignment across different runs. Therefore, for a fair comparison for BCH and RM codes, we both average over different class to prototype mappings, as well as over a fixed class to prototype mapping.
Note that for both one-hot encoding and the \citet{kasarla_maximum_2022} mapping, the mutual cosine similarity is constant for all prototype pairs, and hence the class to prototype mapping does not matter. 
\subsection{Prototype Separation Guarantees} \label{sec:comparing_prototypes}
\begin{figure}[t]
	\centering
	\includegraphics[width=\linewidth]{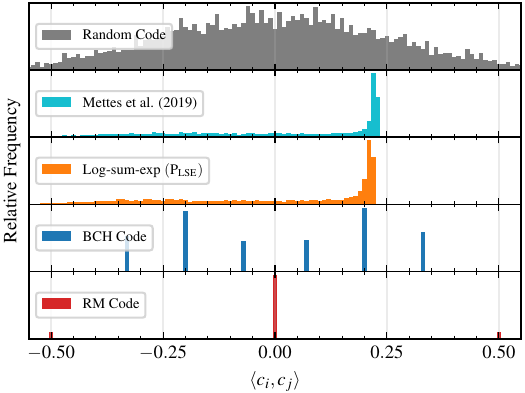}
	\caption{Cosine similarity histograms for different prototype generation schemes for $K=100$ classes in dimension $n=16$. All schemes have average cosine similarity close to $0$. Note that there are $\nicefrac{K}{2}$ RM code-based prototype pairs with cosine similarity $-1$ which have been omitted for clarity.}
	\label{fig:prototype_density}
\end{figure}
We evaluate the achieved prototype separation in terms of maximum cosine similarity over the dimension $n$  for $K \in \{ 10, 100, 1 \ 000 \}$ classes. The results for $K=100$, corresponding to the models trained on CIFAR-$100$ in \Cref{sec:experiments_cifar100}, are presented in \Cref{fig:k100} along with the achievable and converse bounds from Theorem~\ref{thm:sandwich_thm}. Plots for $K = 10$ and $K = 1 \ 000$ classes are provided in \Cref{app:additional_prototypes}. 

For $K=100$, the results confirm our theoretical analysis in \Cref{sec:binary_to_euclidian,sec:bounds}. For $n=99$, the mapping by \citet{kasarla_maximum_2022} achieves the lower bound as expected.  RM codes achieve zero worst-case cosine similarity for $n\ge64$, and the maximum cosine similarity achieved by BCH codes for $n\ge63$ is slightly above zero. That is, the code-based designs give close-to-optimal separation guarantees with only approximately half the number of dimensions. Below $n=63$, the optimisation-based schemes outperform the code-based designs, where the proposed log-sum-exp relaxation gives a slight advantage over \citet{mettes_hyperspherical_2019}. For fewer classes ($K=10$, see \Cref{app:additional_prototypes}), the optimisation-based methods outperform coding-based approaches, and solving the proposed relaxation \eqref{eq:lse_problem} instead of \eqref{eq:mettes_problem} results in a big improvement. For a large number of classes ($K=1 \ 000$, see \Cref{app:additional_prototypes}), the optimisation-based methods perform poorly, and coding-theoretic methods guarantee better separation in lower dimensions $n<K$. We therefore conclude that code-based prototypes are  beneficial  if the number of classes is large, in which case the achievable dimension compression also becomes an attractive feature. 

To provide further insights, we provide histograms of the cosine similarities for the different prototype schemes in \Cref{fig:prototype_density}. We compare the schemes for $K=100$ classes in dimension $n=16$ and, for the sake of illustration, include prototypes created uniformly at random as a baseline. As the histograms show, the optimisation moves probability mass from the right tails towards lower cosine similarity, where the log-sum-exp relaxation achieves a slightly lower maximum cosine similarity. For the coding-based methods, the cosine similarities concentrate in a few different values which can be directly calculated from the weight distribution (or weight-enumerator) function of the linear codes \citep[Chapter 2, §1]{macwilliams_theory_1977}. 
\subsection{Experiments on CIFAR-100}\label{sec:experiments_cifar100}
\begin{figure}
	\centering
	\includegraphics[width=\linewidth]{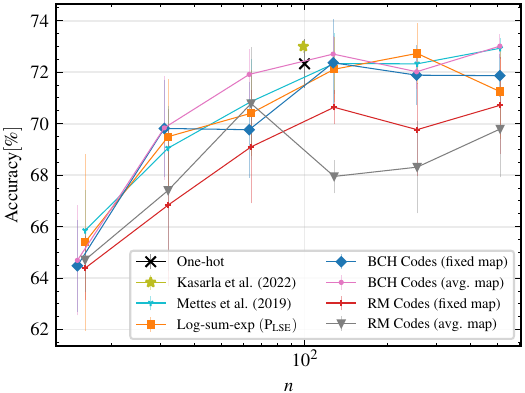}
	\caption{Top-1 accuracy results for CIFAR-$100$ for different prototype generation schemes, averaged over $5$ runs, with errorbars corresponding to one standard deviation.}
	\label{fig:cifar100_acc}
\end{figure}

We now turn to results on CIFAR-$100$, where we compare the performance of different prototype schemes. \Cref{fig:cifar100_acc} shows classification accuracy on the test set for different prototype schemes in different dimensions. We notice a thresholding effect around $n=K$, indicating that little is gained by adding dimensions, which is consistent with the observed worst-case similarity in \Cref{fig:k100}. For $n\in\{31,63,127\}$, the performance of BCH-code-based prototypes averaged over the label mapping dominates the optimization-based schemes. For $n=63$, the performance is close to the performance of one-hot encoding. The mapping by \citet{kasarla_maximum_2022} still performs best at $n=99$, which can be expected since it guarantees a slightly lower worst-case cosine similarity. 

\begin{figure*}[h]
	\centering
	\includegraphics[width=\linewidth]{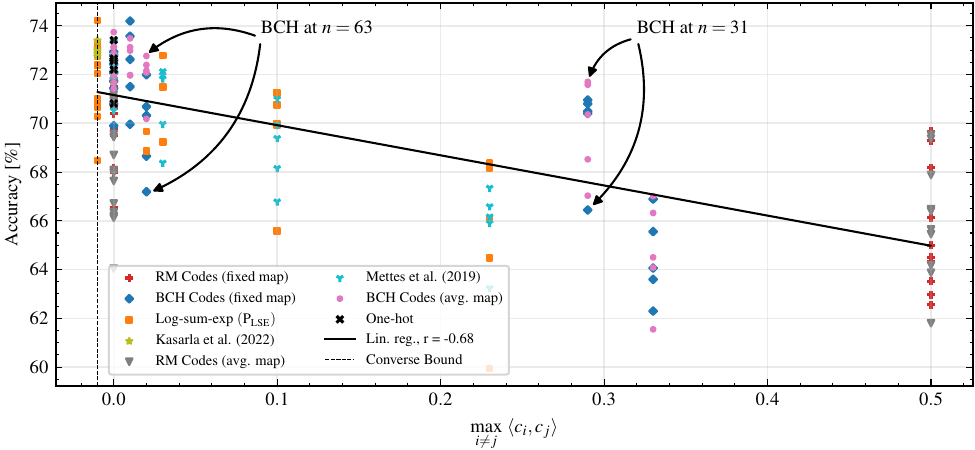}
	\caption{Comparison of accuracy on CIFAR-$100$ and maximum cosine similarity between the $K=100$ prototypes. More dissimilar prototypes are correlated with higher accuracy, but there is significant variance within, and between, models corresponding to different prototype generation schemes. Note that the same maximum similarity may correspond to different $n$, see \Cref{fig:k100}. Interestingly, some BCH codes in  lower dimension and with worse cosine similarity yield better performance than BCH codes in  higher dimension.}
	\label{fig:cifar100_sim_acc}
\end{figure*}

To illustrate the separation/accuracy tradeoff for the different prototype schemes, \Cref{fig:cifar100_sim_acc} plots the accuracy over  the maximum cosine similarity across all our trained models. Through linear regression we find that, as expected, more dissimilar prototypes tend to yield better results. However, there is a significant variance in the accuracy within models trained with the same class of prototypes, and moreover, across different methods with the same maximum similarity. Part of the variance is explained by our use of a randomised validation set. However, as the difference in BCH code performance between the fixed mapping and the average mappings, and the lower performance of RM codes show (see \Cref{fig:cifar100_acc}), the alignment of prototype similarities with \emph{semantic} similarities of classes  appears important.

In particular, the lower accuracy of RM codes in dimensions $n=64$ and $n=128$ is insightful. In these dimensions, they provide no worse than orthogonal prototypes (see \Cref{lemma:orthogonal_rm_codes}), as well as $\nicefrac{K}{2}$ prototype pairs which are diametrically opposed with cosine similarity $-1$. Investigating pairs of classes which were assigned diametrically opposed prototypes, we find several pairs with high semantic similarity, for example \texttt{leopard} and \texttt{lion}; \texttt{shrew} and \texttt{skunk}; and \texttt{seal} and \texttt{shark}. Compared to \citet{mettes_hyperspherical_2019}, who argued for the importance of the maximum and average similarity, our results indicate that additional properties beyond  maximum and average similarity are important. Hence, we argue that further investigation on incorporating the semantics in the data in the labelling of the prototypes is needed.
\section{Conclusion}\label{sec:conclusion}
In this paper, we have analysed the geometry of hyperspherical prototypical learning with tools from coding theory. Firstly, we have presented new code-based constructions to generate hyperspherical prototypes with strong minimum separation guarantees (in terms of worst-case cosine similarity). Secondly, we fully characterised the worst-case cosine similarity of these prototypes (in terms of achievable and converse bounds). Our prototypes are flexible and near-optimal in low dimension, thereby enabling a trade-off between dimension and separation for a given number of classes. Our experimental results furthermore indicate that the classification accuracy does not only depend on the worst-case separation of  prototypes, but also depends on the mapping from class labels to prototypes. We thus conclude that the alignment of semantic similarity with prototype separation is an important problem for further investigation. Additionally, the impact of prototype distance on prototype-based self-supervised learning schemes is also an important future consideration.

\iftoggle{fullversion}{
\section*{Acknowledgements}
This work was funded in part by the Swedish Research Council (VR) through grant agreements 2019-03606 and 2021-05266. The computations were enabled by resources provided by the National Academic Infrastructure for Supercomputing in Sweden (NAISS) at Chalmers Centre for Computational Science and Engineering (C3SE), partially funded by the Swedish Research Council through grant agreement 2022-06725. 
}
%

\bibliography{references}
\bibliographystyle{icml2024}

\newpage
\appendix
\onecolumn
\section{Separation Guarantees for $K=10$ and $K=1 \ 000$ Prototypes}\label{app:additional_prototypes}
In this section, we provide additional results for prototype generation schemes for $K=10$ and $K1 \ 000$ prototypes, see \Cref{fig:k10,fig:k1000}. For $K=10$ prototypes, the optimisation-based approaches work well: in particular, solving \eqref{eq:lse_problem} provides no worse than orthogonal prototypes in dimension $n=8$, and optimally separated prototypes in $n=16$. However, the improvement over one-hot encoding and the \citet{kasarla_maximum_2022} mapping is small. On the other hand, for $K=1 \ 000$ prototypes, the coding-theoretic approaches can guarantee no worse than orthogonal (and therefore near-optimal) separation in $n\approx \nicefrac{K}{2}$, while optimisation-based prototypes require high dimension to give approximately orthogonal prototypes. 
\begin{figure}[H]
	\centering
	\includegraphics[width=0.5\linewidth]{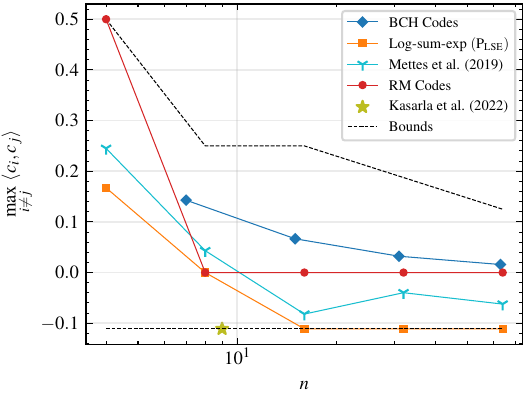}
	\caption{Maximum pairwise cosine similarity of $K=10$ prototypes in various latent space dimensions (logarithmic scale). The optimisation-based \eqref{eq:lse_problem} prototypes achieve better separation than the \eqref{eq:mettes_problem} scheme. One-hot encoding and the \citet{kasarla_maximum_2022} are competitive in this regime. All schemes fall within the achievable and converse bounds from \Cref{thm:sandwich_thm}. For large $n$, the the BCH and RM prototypes yield no worse than othogonal prototypes. The \eqref{eq:lse_problem} and \citet{kasarla_maximum_2022} prototypes achieve the converse bound, and are therefore better. The \eqref{eq:mettes_problem} prototypes are more than orthogonal, but do not achieve the converse bound.}
	\label{fig:k10}
\end{figure}
\begin{figure}[H]
	\centering
	\includegraphics[width=0.5\linewidth]{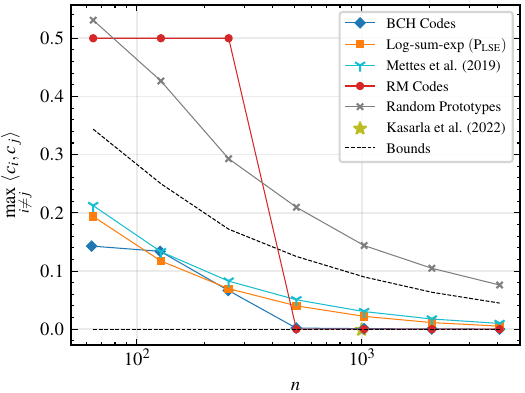}
	\caption{Maximum pairwise cosine similarity of $K=1 \ 000$ prototypes in various latent space dimensions (logarithmic scale). Coding-theoretic approaches give near-optimal separation in low dimension $n\ge511$, and offer better flexibility than the \citet{kasarla_maximum_2022} mapping. The optimisation schemes require high dimension to be competitive, while the coding-theoretic prototypes are no worse than orthogonal in low dimension. In very high dimensions, even uniformly random prototypes are near-orthogonal.}
	\label{fig:k1000}
\end{figure}
\newpage
\section{Cosine Similarity Histograms for $K=10$ and $K=1 \ 000$ prototypes}
In this section, we show cosine similarity histograms for $K=10$ and $K=1 \ 000$ prototypes, see \Cref{fig:density_k10,fig:density_k1000}. The figures illustrate that optimisation-based methods are effective for a small number of prototypes, where the density can be moved around effectively to create good maximum cosine similarity properties. The coding-theoretic approaches concentrate the density in a few cosine similarities, which is beneficial for a large number of prototypes $K$, where they outperform the optimisation-based approaches in lower dimensions $n$.
\begin{figure}[H]
	\centering
	\includegraphics[width=0.5\linewidth]{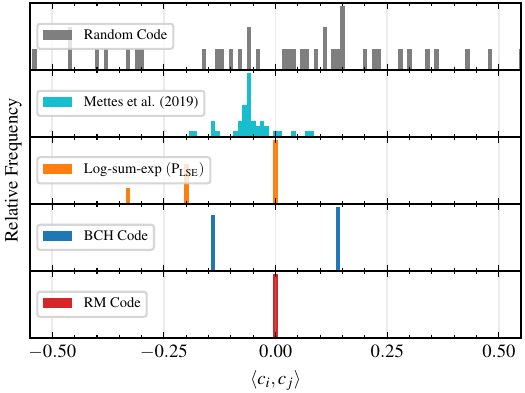}
	\caption{Cosine similarity histogram for different prototype generation schemes for $K=10$ classes and dimension $n=8$. For a small number of classes, the optimisation-based approaches are more efficient in moving probability mass to lower similarities. Note that there is a low but non-zero density for the RM code at similarity $-1$, which has been omitted for clarity.}
 \label{fig:density_k10}
\end{figure}
\begin{figure}[H]
	\centering
	\includegraphics[width=0.5\linewidth]{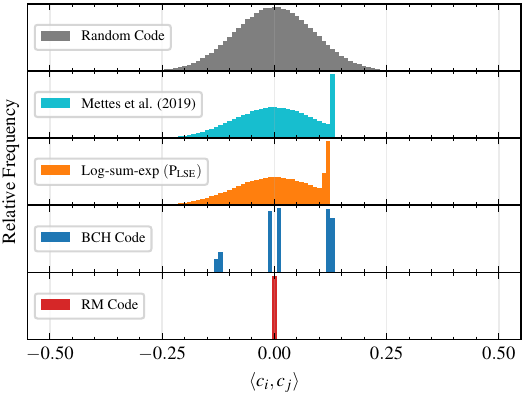}
	\caption{Cosine similarity histogram for different prototype generation schemes for $K=1 \ 000$ classes and dimension $n=128$. The coding-based approaches concentrate density at a few similarities, which outperforms optimisation-based methods for a large number of classes. Note that for the RM code there is non-zero density at similarity $-0.5$ and $0.5$ which are too small to be visible, and that there is density at similarity $-1$ which has been omitted for clarity.}
	\label{fig:density_k1000}
\end{figure}
\newpage
\section{Results on MNIST}\label{app:mnist}
In this section, we provide results on MNIST, similar to the ones on CIFAR-$100$, in \Cref{fig:mnist_acc,fig:mnist_sim_acc}. We notice that while all the schemes perform well on MNIST, we are able to beat one-hot encoding and the \citet{kasarla_maximum_2022} mapping using prototypes obtained from solving \eqref{eq:lse_problem} in lower dimension. We also find that a smaller maximum cosine similarity is correlated with better performance, although the correlation is weaker here than for CIFAR-$100$. 

Similar to CIFAR-$100$, we notice a high variance. In particular, notice the averaged mappings outperforms the fixed mappings for both BCH and RM codes. This again indicates that the semantic class to prototype mapping is important. Again, investigating mappings for RM codes for $n=8$, where the prototypes are no worse than orthogonal, we find the following diametrically opposed pairs in the first trained model. For the fixed mapping, the pairs \texttt{4} and \texttt{5}; and \texttt{8} and \texttt{9} were diametrically opposed. These pairs are semantically similar, especially when hand-written. A randomised class to prototype assignment has a chance of avoiding these semantically similar pairings.

\begin{figure}[H]
	\centering
	\includegraphics[width=0.5\linewidth]{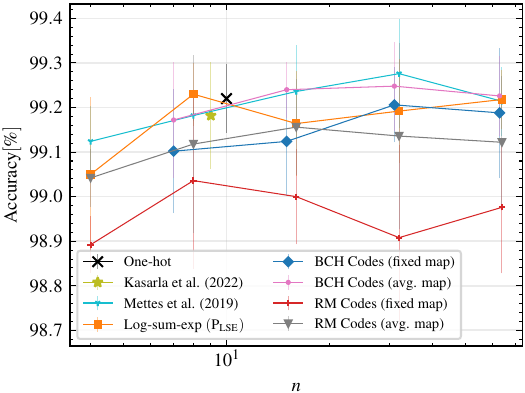}
	\caption{Top-1 accuracy results for MNIST over different prototype generation schemes, averaged over 5 runs, with error bars corresponding to one standard deviation.}
	\label{fig:mnist_acc}
\end{figure}
\begin{figure}[H]
	\centering
	\includegraphics[width=.9\linewidth]{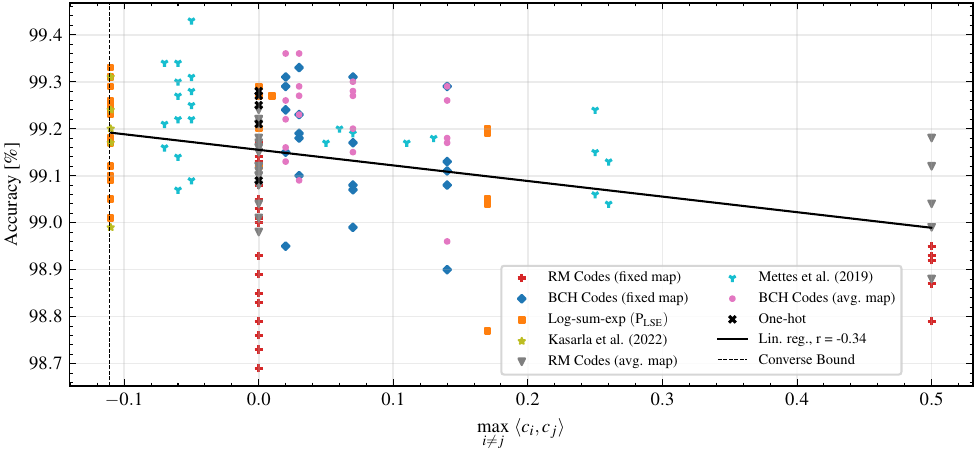}
	\caption{Comparison of accuracy on MNIST and maximum cosine similarity between the $K=10$ prototypes. Higher accuracy is correlated with smaller maximum cosine similarity. However, there is a large variance within, and between, different prototype generation scheme. Note that the same maximum similarity may correspond to different $n$, see \Cref{fig:k10}.}
	\label{fig:mnist_sim_acc}
\end{figure}
%

\end{document}